\DeclareMathOperator*{\argmin}{arg\,min}
\newtheorem{theorem}{Theorem}
\newtheorem{lemma}{Lemma}
\newtheorem{definition}{Definition}
\newtheorem{remark}{Remark}
\title{Graph-based Semi-supervised Local Clustering with Few Labeled Nodes}
\author{
Zhaiming Shen$^1$
\and
Ming-Jun Lai$^1$\And
Sheng Li$^{2}$ 
\affiliations
$^1$University of Georgia, Athens, GA, USA\\
$^2$University of Virginia, Charlottesville, VA, USA
\emails
\{zhaiming.shen, mjlai\}@uga.edu,
shengli@virginia.edu
}
\begin{document}

\maketitle

\begin{abstract}
Local clustering aims at extracting a local structure inside a graph without the necessity of knowing the entire graph structure. As the local structure is usually small in size compared to the entire graph, one can think of it as a compressive sensing problem where the indices of target cluster can be thought as a sparse solution to a linear system.
In this paper, we apply this idea based on two pioneering works under the same framework and propose a new semi-supervised local clustering approach using only few labeled nodes. Our approach improves the existing works by making the initial cut to be the entire graph and hence overcomes a major limitation of the existing works, which is the low quality of initial cut. Extensive experimental results on various datasets demonstrate the effectiveness of our approach.
\end{abstract}

\section{Introduction}
Being able to learn from data by investigating its underlying pattern, and separate data into different groups or clusters based on their latent similarity and differences is one of the main interests in machine learning and artificial intelligence. There are many clustering phenomena across disciplines such as social science, health science, and engineering.
Through the past few decades, traditional clustering problem has been studied a lot and many algorithms have been developed, such as $k$-means clustering \cite{MacQueen1967}, hierarchical clustering \cite{Nielsen2016}, and density based clustering \cite{Ester1996}.
For graph structured data or the data which can be converted into a graph structure by applying some techniques (e.g. K-NN auxiliary graph), it is natural to consider the task as a graph clustering problem.

Traditional graph clustering problem assumes the underlying data structure as a graph where data points are the nodes and the connections between data points are the edges. It assigns each node into a unique cluster, assuming there are no multi-class assignments. For nodes with high connection density, they are considered in the same cluster, and for nodes with low connection density, they are considered in different clusters. Since the task is to learn the clustering patterns by investigating the underlying graph structure, it is an unsupervised learning task. Many unsupervised graph clustering algorithms have been developed through decades. For example, spectral clustering \cite{Ng2001}, which is based on the eigen-decomposition of Laplacian matrices of either weighted or unweighted graphs. Based on this, many variants of spectral clustering algorithms have been proposed, such as \cite{Zelnik2004} and \cite{Huang2012CVPR}.  Another category is the graph partition based method such as finding the optimal cut \cite{Dhillon2004,Ding2001}. It is worthy noting that spectral clustering and graph partition have the same essence, see \cite{Luxburg2007}. Spectral clustering has become one of the popular modern clustering algorithms since it enjoys the advantage of exploring the intrinsic data structures. It is simple to implement, and it often outperforms the traditional algorithm such as $k$-means. However, one of the main drawbacks of spectral clustering is its high computational cost, so it is usually not applicable to large datasets. Meanwhile, the spectral clustering method does not perform well on the auxiliary graphs which are generated from certain shapes of numerical data, e.g., elongated band shape data and moon shape data. In addition, many other clustering methods have been developed, such as the low rank and sparse representations based methods \cite{Liu2013,Huang2015}, deep embedding based methods \cite{Xie2016}, and graph neural network based methods \cite{Hui2020,Tsitsulin2020}. Besides the unsupervised way, some semi-supervised graph clustering methods have also been proposed \cite{Kulis2009,Kang2021,Ren2019}.  

These clustering algorithms, whether unsupervised or semi-supervised, are all global clustering algorithms, which means that the algorithms output all the clusters simultaneously. However, it is often to people's interests in only finding a single target cluster which contains the given labels, without worried too much about how the remaining part of graph will be clustered. Such an idea is very useful in detecting small-scale structure in a large-scale graph. This type of problem is referred to as \emph{local clustering} or \emph{local cluster extraction}. Most of the current local clustering algorithms aim at finding the best cut from the graph, for example, \cite{Veldt2019,Fountoulakis2020,Orecchia2014}. 
It is worth pointing out that \cite{Fountoulakis2018} have kindly put several methods of local graph clustering into software, including both the spectral methods \cite{Andersen2006,Fountoulakis2019} and flow-based methods \cite{Lang2004,Veldt2016,Wang2017}. 
More recently, new approaches for making the cut of graph based on the idea of compressive sensing are proposed in \cite{LaiMckenzie2020} and \cite{LaiShen2022}, where they took a novel perspective by considering the way of finding the optimal cut as an improvement from an initial cut via finding a sparse solution to a linear system. However, the performances of their approaches will heavily depend on the quality of initial cut.

In this paper, based on the idea of compressive sensing, we propose a semi-supervised local clustering method using only few labeled nodes from the target cluster, with theoretical guarantees. 
Our approach improves the existing works by making the initial cut to be the entire graph and hence overcomes the issue that missing vertices of the target cluster from the initial cut are not recoverable in the later stage. Extensive experiments are conducted on various benchmark datasets to show our approach outperforms its counterparts \cite{LaiMckenzie2020,LaiShen2022}. Results also show that our approach is favorable than many other state-of-the-art semi-supervised clustering algorithms. 


\section{Preliminaries} \label{secPrelim}
\subsection{Graph Notations and Concepts}
We adopt the standard notations for graph $G=(V, E)$, where $V$ is the set of all vertices and $E$ is the set of all edges.
In the case that the size of graph equals to $n$, we identify $V=\{1,2,\cdots, n\}=[n]$. For a graph $G$ with $k$ non-overlapping underlying clusters $C_1, C_2, \cdots, C_k$, we use $n_i$ to indicate the size of $C_i$ where $i=1,2,\cdots,k$. Without loss of generality, let us assume $n_1\leq n_2\leq \cdots \leq n_k$. Furthermore, we use matrix $A$ to denote the adjacency matrix of graph $G$, and use $D$ to denote the diagonal matrix where each diagonal entry in $D$ is the degree of the corresponding vertex. In addition, we define the notion called graph Laplacian.

\begin{definition}
The unnormalized graph Laplacian of graph G is defined as $L=D-A$. The symmetric graph Laplacian of graph G is defined as $L_{sym}: =I-D^{-1/2}AD^{-1/2}$ and the random walk graph Laplacian is defined as  $L_{rw}: = I-D^{-1}A$.
\end{definition}
For the scope of our problem, we will only focus on $L_{rw}$ for the rest of discussion and we will use $L$ to denote $L_{rw}$ for the concise of notation. Recall the following fundamental result from spectral graph theory. We omit the proof by referring to \cite{Chung1997} and \cite{Luxburg2007}.
\begin{lemma} \label{kernelthm}
	Let G be an undirected graph with non-negative weights. The multiplicity $k$ of the eigenvalue zero of $L$ equals to the number of connected components  $C_1, C_2, \cdots, C_k$ in $G$, and the indicator vectors $\textbf{1}_{C_1}, \cdots, \textbf{1}_{C_k}\in\mathbb{R}^n$ on these components span the kernel of $L$.
\end{lemma}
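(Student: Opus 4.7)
The plan is to establish the lemma in three stages: first show each indicator vector lies in $\ker L$, then argue their linear independence, then show nothing else is in the kernel. Since the paper fixes $L = L_{rw} = I - D^{-1}A$, I will reduce to the unnormalized Laplacian $\tilde{L} = D - A$ where the standard quadratic form argument is cleanest, assuming all degrees are positive (isolated vertices form their own one-dimensional component and can be handled as a trivial separate case).

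First I would verify directly that $L\mathbf{1}_{C_i} = 0$ for each connected component $C_i$. Fixing a vertex $v$, if $v \in C_i$ then $(A\mathbf{1}_{C_i})_v = \sum_{u \in C_i} A_{vu} = d_v$ because every neighbor of $v$ lies in $C_i$, so $(D^{-1}A\mathbf{1}_{C_i})_v = 1 = (\mathbf{1}_{C_i})_v$; if $v \notin C_i$ then $v$ has no neighbors in $C_i$, so $(A\mathbf{1}_{C_i})_v = 0 = (\mathbf{1}_{C_i})_v$. Hence $L\mathbf{1}_{C_i} = 0$. Linear independence is immediate since the $\mathbf{1}_{C_i}$ have pairwise disjoint supports (the components partition $V$), so $\dim \ker L \geq k$.

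The main step, and the one that requires a little more care, is the reverse containment $\dim \ker L \leq k$. Here I will exploit that $L f = 0$ iff $\tilde{L} f = 0$: since $D$ is invertible (after the isolated-vertex reduction), $L f = D^{-1}\tilde{L} f = 0$ forces $\tilde{L} f = 0$, and conversely. Then I invoke the classical identity
\[
f^\top \tilde{L} f \;=\; \tfrac{1}{2}\sum_{u,v \in V} A_{uv}\bigl(f_u - f_v\bigr)^2,
\]
which follows from expanding $D-A$. Since $A_{uv} \geq 0$, $\tilde{L}f = 0$ implies $f^\top \tilde{L}f = 0$, which forces $f_u = f_v$ whenever $A_{uv} > 0$. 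Walking along paths inside each $C_i$ shows $f$ is constant on each connected component, so $f \in \mathrm{span}\{\mathbf{1}_{C_1},\ldots,\mathbf{1}_{C_k}\}$.

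The one subtlety I expect to be the main obstacle is the asymmetric structure of $L_{rw}$: it is not symmetric, so the usual quadratic form argument does not apply directly. The trick above (reducing $\ker L_{rw}$ to $\ker \tilde{L}$ via invertibility of $D$) bypasses this, but it does assume no vertex is isolated. For graphs with isolated vertices, each such vertex contributes its own trivial component and a corresponding indicator vector; one should convene the standard convention (e.g., defining the relevant row of $L$ as zero) so the statement still reads cleanly. Once this bookkeeping is settled, combining the three stages yields exactly $\ker L = \mathrm{span}\{\mathbf{1}_{C_1},\ldots,\mathbf{1}_{C_k}\}$, which is what the lemma claims.
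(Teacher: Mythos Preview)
Your argument is correct and is precisely the standard proof found in the references the paper cites (Chung 1997; von Luxburg 2007). The paper itself does not give a proof of this lemma at all---it explicitly writes ``We omit the proof by referring to \cite{Chung1997} and \cite{Luxburg2007}''---so there is nothing to compare against beyond noting that your three-stage outline (indicators are in the kernel, they are independent via disjoint supports, the quadratic form $f^\top (D-A)f = \tfrac{1}{2}\sum_{u,v}A_{uv}(f_u-f_v)^2$ forces any kernel vector to be componentwise constant) is exactly the argument in von Luxburg's tutorial, including the reduction from $L_{rw}$ to the unnormalized Laplacian via invertibility of $D$.
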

For a graph $G$ with underlying structure which separates vertices into different clusters, we can write $G=G^{in}\cup G^{out}$, where $G^{in}=(V, E^{in})$ and $G^{out}=(V, E^{out})$. Here $E^{in}$ is the set of all intra-connection edges within the same cluster, $E^{out}$ is the set of all inter-connection edges between different clusters. We use $A^{in}$ and $A^{out}$ to denote the adjacency matrices associated with $G^{in}$ and $G^{out}$ respectively, and use $L^{in}$ and $L^{out}$ to denote the Laplacian matrices associated with $G^{in}$ and $G^{out}$ respectively.
From their definitions, we can easily see $L^{in}$ is in a block diagonal form if the vertices are sorted according to their memberships.
It is worthwhile to point out that $A=A^{in}+A^{out}$ but $L\neq L^{in}+L^{out}$ in general.

\begin{remark} \label{remarkNotation}
    We introduce these notations in order for the convenience of our discussion in the later sections. Note that in reality we will have no assurance about which cluster each individual vertex belongs to, so we have no access to $A^{in}$ and $L^{in}$. What we have access to are $A$ and $L$. 
\end{remark}

Furthermore, for a set $S$, we use $|S|$ to denote its size. For a matrix $M$ or vector $\mathbf{v}$, we use $|M|$ or $|\mathbf{v}|$ to denote the matrix or vector where each of its entry is replaced by the absolute value. For a matrix $M$ and a set $S\subset V$, we use $M_S$ to denote the submatrix of $M$ where the columns of $M_S$ consist of only the indices in $S$. For convenience, we summarize the notations being used throughout this paper in Table \ref{Notation}. 

\begin{table}[t]
    \centering
    \resizebox{\columnwidth}{!}{
    \begin{tabular}{ll}
        \toprule
         Symbols & Interpretations \\
         \midrule
         $G$ & A general graph of interest \\
         $|G|$ & Size of G \\
        $V$ & Set of vertices of graph G \\
        $|V|$ & Size of $V$ \\
        $C_1$ & Target Cluster\\
        $\Gamma$ & Set of Seeds \\
        $T$ & Removal set from $V$\\
        $E$ & Set of edges of graph G \\
        $E^{in}$ & Subset of $E$ which consists only intra-connection edges \\
        $E^{out}$ & Subset of $E$ which consists only inter-connection edges \\
        $G^{in}$ & Subgraph of $G$ on $V$ with edge set $E^{in}$ \\
        $G^{out}$ & Subgraph of $G$ on $V$ with edge set $E^{out}$ \\
        $A$ & Adjacency matrix of graph $G$ \\
        $A^{in}$ & Adjacency matrix of graph $G^{in}$ \\
        $A^{out}$ & Adjacency matrix of graph $G^{out}$ \\
        $L$ & Random walk graph Laplacian of $G$ \\
        $L^{in}$ & Random walk graph Laplacian of $G^{in}$ \\
        $L^{out}$ & Random walk graph Laplacian of $G^{out}$  \\
        $L_{C}$ & submatrix of $L$ with column indices $C\subset V$ \\
        $L^{in}_{C}$ & submatrix of $L^{in}$ with column indices $C\subset V$ \\
        $|M|$  & Entrywised absolute value operation on matrix $M$ \\ 
        $\|M\|_2$ & $\|\cdot\|_2$ norm of matrix $M$ \\
        $|\mathbf{v}|$ & Entrywised absolute value operation on vector $\mathbf{v}$ \\
        $\|\mathbf{v}\|_2$ & $\|\cdot\|_2$ norm of vector $\mathbf{v}$.
         \\
         $\mathbf{1}_{C}$  & Indicator vector on subset $C\subset V$ \\
         $\triangle$ & Set symmetric difference \\
         $Ker$ & Kernel of the linear map induced by a matrix \\
         $Span$ & Spanning set of a set of vectors \\
         $\mathcal{L}_s(\mathbf{v})$ & $\{i\in[n]: \text{$v_i$ among $s$ largest-in-magnitude entries in $\mathbf{v}$}\}$ \\
       \bottomrule
    \end{tabular}
    }
    \caption{Table of Notations}
    \label{Notation}
\end{table}


\subsection{Compressive Sensing}
Recall that $\|\cdot\|_0$ counts the number of nonzero components in a vector. The idea of compressive sensing comes from solving the optimization problem:
\begin{equation} \label{CS1}
    \min \|\mathbf{x}\|_0 \quad s.t. \quad \|\Phi\mathbf{x}-\mathbf{y}\|_2\leq\epsilon,
\end{equation} 
where $\Phi\in\mathbb{R}^{m\times n}$ is called sensing matrix,  $\mathbf{y}\in\mathbb{R}^n$ is called measurement vector. The goal is to recover the sparse solution $\mathbf{x}\in\mathbb{R}^n$ under some constraints. It can be reformulated as solving:
\begin{equation} \label{CS2}
    \argmin \|\Phi\mathbf{x} - \mathbf{y}\|_2 \quad s.t. \quad \|\mathbf{x}\|_0\leq s.
\end{equation}
Its idea was first introduced by Dohono \cite{Donoho2006} and Candès, Romberg, Tao \cite{Candes2006}. Since then, many algorithms have been developed to solve (\ref{CS1}) or (\ref{CS2}), including the greedy based approaches such as orthogonal matching pursuit (OMP) \cite{Tropp2004} and its variants, quasi-orthogonal matching pursuit (QOMP) \cite{Feng2021}, thresholding based approaches such as iterative hard thresholding \cite{Blumensath2009}, compressive sensing matching pursuit (CoSAMP) \cite{Needell2009}, and subspace pursuit \cite{Dai2009}, etc,. Note that (\ref{CS1}) is NP-hard because of the appearance of zero norm. Therefore it is sometimes convenient to solve its $\ell_1$ convex relaxation:
\begin{equation} \label{CS3}
    \min \|\mathbf{x}\|_1 \quad s.t. \quad \|\Phi\mathbf{x}-\mathbf{y}\|_2\leq\epsilon.
\end{equation} 
Algorithms such as LASSO \cite{Tibshirani1996}, CVX \cite{Grant2008}, and reweighted $\ell_1$-minimization \cite{Candes2008} fall into this category. We do not analyze further here. The monograph \cite{LaiWang2021} gives a comprehensive summary of these algorithms. 

It is worthwhile to mention one of the key concepts in compressive sensing, Restricted Isometry Property (RIP), which guarantees a good recovery of the solution to (\ref{CS1}).
\begin{definition}
Let $\Phi\in\mathbb{R}^{m\times n}$, $1\leq s\leq n$ be an integer. Suppose there exists a constant $\delta_s\in (0,1)$ such that
\begin{equation} \label{RIP}
    (1-\delta_s)\|\mathbf{x}\|_2^2\leq\|\Phi\mathbf{x}\|_2^2\leq (1+\delta_s)\|\mathbf{x}\|_2^2
\end{equation}
for all $\mathbf{x}\in\mathbb{R}^n$ with $\|\mathbf{x}\|_0\leq s$. Then the matrix $\Phi$ is said to have the Restricted Isometry Property (RIP). The smallest constant $\delta_s$ which makes (\ref{RIP}) hold is called the Restricted Isometry Constant (RIC).
\end{definition}
Another very important aspect which makes compressive sensing very useful is its robustness to noise. Suppose we try to solve the linear system $\mathbf{y}=\Phi \mathbf{x}$ given the measurement $\mathbf{y}$ and sensing matrix $\Phi$. It is possible that we only have access to a noise version of $\Phi$, say $\Tilde{\Phi}=\Phi+\epsilon_{1}$, and also only have access to a noise version of $\mathbf{y}$, say $\mathbf{\Tilde{y}}=\mathbf{y}+\mathbf{\epsilon}_2$. Therefore, instead of solving $\mathbf{y}=\Phi \mathbf{x}$, what we solve in reality is $\mathbf{\Tilde{y}}=\Tilde{\Phi}\mathbf{\Tilde{x}}$. However, if $\epsilon_1, \epsilon_2$ are both small in some sense, and the sensing matrix $\Phi$ satisfies certain conditions, then we will have $\mathbf{\Tilde{x}}\approx\mathbf{x}$. There are plenty of ways to solve $\mathbf{\Tilde{x}}$ given $\mathbf{\Tilde{\Phi}}$ and $\mathbf{\Tilde{y}}$, what we will be focusing on is subspace pursuit \cite{Dai2009}. Theorem 2.5 in \cite{LaiMckenzie2020} and Corollary 1 in \cite{Li2016} gives a result about how close $\mathbf{\Tilde{x}}$ and $\mathbf{x}$ can be based on the conditions of $\Tilde{\Phi}$, $\Phi$, $\mathbf{\Tilde{y}}$, $\mathbf{y}$, which we will apply later in our theoretical analysis part.

\subsection{Problem Statement}
For convenience, let us assume the target cluster is the first cluster $C_1$ for the rest of discussion. Now let us formally state the local clustering task that we are interested in:

Suppose $G=(V,E)$ is a graph with underlying cluster $C_1, \cdots, C_k$ where $V=\cup_{i=1}^n C_i$, $C_i\cap C_j=\emptyset$ for $1\leq i, j\leq k$, $i\neq j$. Given a set of labeled vertices $\Gamma\subset C_1$, which we call them seeds, assuming the size of $\Gamma$ is small relative to the size of $C_1$. The goal is to extract all the vertices in the target cluster $C_1$.

\section{Local Clustering via Compressive Sensing}  \label{secAlg}

The local clustering task can be considered as a compressive sensing problem in the following way. Suppose the vertices have been sorted according to their memberships, i.e., the first $n_1$ rows and columns in $L^{in}$ corresponds to all the vertices in $C_1$, the last $n_k$ rows and columns corresponds to all the vertices in $C_k$, etc,.  

Let $L^{in}_{-1}$ be the matrix obtained from $L^{in}$ by deleting the first column from $C_1$. For this particular graph, all the clusters have size three, the symbol $*$ equals to $-1/2$, and all the other entries in the off-diagonal blocks equal to zero. 
\begin{equation}
L^{in}_{-1} = \left( 
\begin{array}{ccccccccc}
* & * &  &  & & & & &   \\
1 & * &  &  & & & & &    \\
* & 1 &  &  & & & & &   \\
&  & 1 & * & * & & & &  \\
&  & * & 1 & * & & & &    \\
&  & * & * & 1 & & & &     \\
&  &   &   &   & \ddots & & &  \\
&  &  &  &  &  & 1 & * & *    \\
&  &  &  &  &  & * & 1 & *    \\
&  &  &  &  &  & * & * & 1    \\
\end{array}
\right)
\end{equation}
Let $\mathbf{y}^{in}$ be the row sum vector of $L^{in}_{-1}$.
Then the desired solution to the compressive sensing problem
\begin{equation}
    \min\|\mathbf{x}\|_0 \quad s.t. \quad L^{in}_{-1}\mathbf{x}=\mathbf{y}^{in}
\end{equation}
is $\mathbf{x^*}=(1,1,0,\cdots,0)'$. The significance of this formulation is that the nonzero components in $\mathbf{x^*}$ correspond to the indices of vertices which belong to the target cluster $C_1$.  This gives us the intuitive idea of how to apply compressive sensing for solving local clustering problem. 

As noted in Remark \ref{remarkNotation}, we usually do not have access to $L^{in}$ or $L^{in}_{-1}$, what we do have access to are $L$ and $L_{-1}$. We can relax the exact equality condition to approximately equal to, so the problem becomes
\begin{equation} \label{noisynegativeone}
    \min\|\mathbf{x}\|_0 \quad s.t. \quad L_{-1}\mathbf{x}\approx\mathbf{y},
\end{equation}
where $\mathbf{y}$ is the row sum vector of $L_{-1}$.
Let $\mathbf{x^\#}$ be the solution to (\ref{noisynegativeone}). 
Suppose the graph has a good underlying clusters structure, in other words, the entries in the off-diagonal block of $L_{-1}$ have very small magnitude, i.e., $L_{-1}\approx L_{-1}^{in}$. Then we should have $\mathbf{y}\approx\mathbf{y}^{in}$, and hence the difference between $\mathbf{x^\#}$ and $\mathbf{x^*}$ should be small in certain sense. We can then use some cutoff number $R\in (0,1)$ to separate the coordinates of $\mathbf{x^\#}$ and therefore extract the target cluster from the entire graph.

\subsection{Main Algorithm} 
In general, we can remove more than just one column. That is, we remove a set $T\subset V$ in a somewhat smart way, with the hope that $T\subset C_1$, and then we solve
\begin{equation} \label{noisyminusT}
    \min\|\mathbf{x}\|_0 \quad s.t. \quad \|L_{V\setminus T}\mathbf{x}-\mathbf{y}\|_2\leq\epsilon.
\end{equation}
Or equivalently, we solve
\begin{equation}
\label{outalglsqremoveT}
\argmin_{\mathbf{x}  \in \mathbb{R}^{|V|-|T|}} \{\|L_{V\setminus T}\mathbf{x}- \mathbf{y}\|_2: \|\mathbf{x}\|_0\leq s\}
\end{equation}
where vector $\mathbf{y}$ is the row sum vector of $L_{V\setminus T}$ and $s$ is the sparsity constraint.

Naively, if the size of $\Gamma$ is not too small, then we can just choose $T=\Gamma$. However, for the scope of our problem, the size of $\Gamma$ is assumed to be small relative to the size of $C_1$, therefore this choice does not work well in practice.
Instead, we select $T$ based on a heuristic criterion (as described in step 4) on a candidate set $\Omega$ which is obtained from a random walk originates from $\Gamma$.  We also find that the size of $T$ does not matter too much based on our exploration in the experiments. The idea is summarized in Algorithm \ref{alg2} as CS-LCE. We give a more detailed explanation about several aspects of the algorithm in Remark \ref{remarkOmega} and Remark \ref{remarkSP}. More generally, we can apply CS-LCE iteratively to extract all the clusters one at a time. 

\begin{algorithm}[tb]
    \caption{Compressive Sensing of Local Cluster Extraction (CS-LCE)}
    \label{alg2}
    \textbf{Input}: Adjacency matrix $A$, and a small set of seeds $\Gamma\subset C_1$ \\
    \textbf{Parameter}: Estimated size $\hat{n}_1\approx |C_1|$, random walk threshold parameter $\epsilon\in (0,1)$, random walk depth $t\in\mathbb{Z}^{+}$, sparsity parameter $\gamma\in [0.1, 0.5]$, rejection parameter $R\in [0.1, 0.9]$\\
    \textbf{Output}: The target cluster $C_1$
    \begin{algorithmic}[1] 
        \STATE Compute $P=AD^{-1}$,  $\mathbf{v}^{0}=D\mathbf{1}_{\Gamma}$, and $L=I-D^{-1}A$.
        \STATE Compute $\mathbf{v}^{(t)}=P^t\mathbf{v}^{(0)}$.
        \STATE Define $\Omega={\mathcal{L}}_{(1+\epsilon)\hat{n}_1}(\mathbf{v}^{(t)})$. 
        \STATE Let $T$ be the set of column indices of $\gamma\cdot|\Omega|$ smallest components of the vector $|L_{\Omega}^{\top}|\cdot|L\mathbf{1}_{\Omega}|$.
        \STATE Set $\mathbf{y}:=L\mathbf{1}_{V\setminus T}$. Let $\mathbf{x}^\#$ be the solution to
        \begin{equation} 
        \label{inalglsqremoveT}
        \argmin_{\mathbf{x}  \in \mathbb{R}^{|V|-|T|}} \{\|L_{V\setminus T}\mathbf{x}- \mathbf{y}\|_2: \|\mathbf{x}\|_0\leq (1-\gamma)\hat{n}_1\}
        \end{equation}
        obtained by using $O(\log n)$ iterations of \emph{Subspace Pursuit} \cite{Dai2009}.
        \STATE Let $W^{\#} = \{i: \mathbf{x}_i^{\#}>R\}$  . 
        \STATE \textbf{return} $C^{\#}_1=W^{\#}\cup T$. 
    \end{algorithmic}
\end{algorithm}

We would like to point out the major differences between CS-LCE with its counterparts CP+RWT in \cite{LaiMckenzie2020} and LSC in \cite{LaiShen2022}. The key difference is that the latter two methods only be able to extract target cluster from the initial cut $\Omega$, since it is assumed that $C_1\subset \Omega$ in these two methods before extracting all the vertices in $C_1$, and once $\Omega$ fails to contain any vertex in $C_1$, there is no chance for CP+RWT or LSC to recover those vertices in the later stage. However, such an assumption is not needed in CS-LCE. Since the sensing matrix in CS-LCE is associated with all the vertices corresponding to $V\setminus T$, it is very probable for CS-LCE to still be able to find the vertices which are in $C_1$ but not in $\Omega$. 

\begin{remark} \label{remarkOmega}
The purpose of $\Omega$ is solely for obtaining the set $T$, and the vector $\mathbf{y}$ is computed by adding up all the columns with indices in the set $V\setminus T$. This is another key difference between CS-LCE and CP+RWT \cite{LaiMckenzie2020} and LSC \cite{LaiShen2022}, whereas the latter two methods directly use $\Omega$ to obtain $\mathbf{y}$. 
\end{remark}

\begin{remark} \label{remarkSP}
The rationale for choosing an iterative approach such as Subspace Pursuit over other sophisticated optimization algorithms for solving (\ref{inalglsqremoveT}) comes from the nature of our task. Since the task is clustering, all we need is a relative good estimated solution instead of the exact solution, then we can use a cutoff number $R$ in Algorithm \ref{alg2} to separate the aimed cluster from the remaining of the graph. Due to the nature of an iterative approach, the convergence is usually fast at the beginning and slow in the end, so we can stop early in the iteration to save the computational cost once the estimated solution is roughly ``close enough'' to the true solution. 
\end{remark}



\subsection{Theoretical Analysis}
For convenience, let us fix $\gamma=0.4$ for the rest of discussion.
We want to make sure the output $C_1^\#$ from Algorithm \ref{alg2} is as close to the true cluster $C_1$ as possible. In order to investigate more towards this aspect, let us use $\mathbf{x}^*$ to denote the solution to the unperturbed problem:
\begin{equation} \label{noiseless}
\mathbf{x}^*:=\argmin_{\mathbf{x}  \in \mathbb{R}^{|V|-|T|}} \{\|L^{in}_{V\setminus T}\mathbf{x}- \mathbf{y}^{in}\|_2: \|\mathbf{x}\|_0\leq 0.6n_1\}
\end{equation}
where $\mathbf{y}^{in}=L^{in}\mathbf{1}_{V\setminus T}$. Let $\mathbf{x}^\#$ be the solution to (\ref{inalglsqremoveT}), the perturbed problem, with $\gamma=0.4$.

Let us first establish the correctness of having $\mathbf{x}^*$ equals to an indicator vector as the solution to (\ref{noiseless}), and then conclude that $\mathbf{x}^\#\approx\mathbf{x}^*$ if $L\approx L^{in}$ in a certain sense. Once this is established, we will be able to conclude $C_1^\#\approx C_1$. These results are summarized in the following as a series of theorems and lemma.
\begin{theorem}
Suppose $T\subset C_1$. Then $\mathbf{x}^*=\mathbf{1}_{C_1\setminus T}\in\mathbb{R}^{|V|-|T|}$ is the unique solution to (\ref{noiseless}).
\end{theorem}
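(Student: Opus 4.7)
The plan is to split the argument into an existence (or achievability) step and a uniqueness step. For existence I would verify directly that $\mathbf{1}_{C_1\setminus T}$ is feasible and drives the residual to zero; for uniqueness I would exploit the kernel characterization of $L^{in}$ from Lemma~\ref{kernelthm} together with the sparsity budget $0.6 n_1$.

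For the first step, feasibility requires $\|\mathbf{1}_{C_1\setminus T}\|_0 = n_1 - |T| \le 0.6 n_1$, i.e.\ $|T|\ge 0.4 n_1$, which is consistent with the algorithm's choice $|T|=\gamma|\Omega|$ with $\gamma=0.4$. For the residual, I would decompose $V\setminus T = (C_1\setminus T)\cup C_2\cup\cdots\cup C_k$, giving
\begin{equation*}
\mathbf{y}^{in} = L^{in}\mathbf{1}_{V\setminus T} = L^{in}\mathbf{1}_{C_1\setminus T} + \sum_{j=2}^k L^{in}\mathbf{1}_{C_j} = L^{in}\mathbf{1}_{C_1\setminus T},
\end{equation*}
because each $\mathbf{1}_{C_j}$ lies in $\ker L^{in}$ by Lemma~\ref{kernelthm}. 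The right-hand side is exactly $L^{in}_{V\setminus T}\mathbf{1}_{C_1\setminus T}$ viewed in $\mathbb{R}^{|V|-|T|}$, so the residual vanishes.

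For uniqueness, suppose $\mathbf{x}'$ is any feasible solution with zero residual, and set $\mathbf{d}:=\mathbf{x}'-\mathbf{1}_{C_1\setminus T}$, so that $L^{in}_{V\setminus T}\mathbf{d}=0$. Padding $\mathbf{d}$ by zeros on the coordinates in $T$ produces $\tilde{\mathbf{d}}\in\mathbb{R}^n$ with $L^{in}\tilde{\mathbf{d}}=0$, hence by Lemma~\ref{kernelthm} one has $\tilde{\mathbf{d}}=\sum_{j=1}^k\alpha_j\mathbf{1}_{C_j}$. Since $\tilde{\mathbf{d}}$ vanishes on the non-empty set $T\subset C_1$, the coefficient $\alpha_1$ must be zero, so $\tilde{\mathbf{d}}$ is supported on $C_2\cup\cdots\cup C_k$, which is disjoint from $T$. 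Therefore $\mathbf{x}' = \mathbf{1}_{C_1\setminus T} + \sum_{j\ge 2}\alpha_j\mathbf{1}_{C_j}$ with disjoint supports, and
\begin{equation*}
\|\mathbf{x}'\|_0 = (n_1-|T|) + \sum_{j\ge 2:\,\alpha_j\ne 0} n_j.
\end{equation*}
The constraint $\|\mathbf{x}'\|_0 \le 0.6 n_1$ forces $\sum_{j\ge 2:\,\alpha_j\ne 0} n_j \le |T|-0.4 n_1$; since $n_j\ge n_1$ for $j\ge 2$ and $|T|$ is close to $0.4 n_1$ (in particular $|T|<1.4 n_1$), no $\alpha_j$ can be nonzero, giving $\mathbf{d}=0$.

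The main obstacle I expect is bookkeeping the two regimes on $|T|$: one needs $|T|\ge 0.4n_1$ for $\mathbf{1}_{C_1\setminus T}$ to be feasible, and simultaneously $|T|<0.4 n_1 + n_2$ (comfortably implied by $|T|\approx 0.4 n_1$) for the uniqueness step to eliminate every nontrivial kernel perturbation. Both bounds are consistent with the algorithmic choice of $T$ but should be recorded as hypotheses. A secondary subtlety worth flagging is that Lemma~\ref{kernelthm} produces exactly one indicator per connected component, so the argument implicitly assumes each cluster $C_i$ is connected in $G^{in}$; otherwise the $C_1$-part of a kernel element could be nonzero on components of $C_1$ not meeting $T$, and one would have to strengthen the assumption on $T$ to intersect every component of $C_1$.
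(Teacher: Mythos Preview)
Your argument is correct and follows essentially the same route as the paper's: both reduce uniqueness to the kernel characterization in Lemma~\ref{kernelthm} and then use the sparsity budget together with $n_j\ge n_1$, the only cosmetic difference being that the paper subtracts $\mathbf{1}$ and proves $L^{in}_{C_1\setminus T}$ has full column rank, while you subtract $\mathbf{1}_{C_1\setminus T}$, pad by zeros, and read off $\alpha_1=0$ from the vanishing on $T$. Your bookkeeping is in fact tidier than the paper's (which never checks feasibility $|T|\ge 0.4n_1$); note however that the upper bound you flag, $|T|<0.4n_1+n_2$, is automatic from the hypothesis $T\subset C_1$, since $|T|\le n_1\le n_2<0.4n_1+n_2$.
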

\begin{proof}
Note that for $\mathbf{y}^{in}=L^{in}\mathbf{1}_{V\setminus T}$, we can rewrite it as $\mathbf{y}^{in}=L^{in}_{V\setminus T}\mathbf{1}$ where $\mathbf{1}\in\mathbb{R}^{|V|-|T|}$.
It is straightforward to check $\mathbf{x}^*=\mathbf{1}_{C_1\setminus T}$ is a solution to (\ref{noiseless}). The rest is to show it is unique. 

Suppose otherwise, then since $L^{in}_{V\setminus T}\mathbf{1}_{C_1\setminus T}=\mathbf{y}^{in}$, we want to find $\mathbf{x}\in\mathbb{R}^{|V|-|T|}$ and $\mathbf{x}\neq\mathbf{1}_{C_1\setminus T}$ such that $L^{in}_{V\setminus T}(\mathbf{x}-\mathbf{1})=\mathbf{0}$. Without loss of generality, let us assume the columns of $L$ are permuted such that it is in the block diagonal form, i.e.,
\begin{equation*}
L_{V\setminus T}^{in} = \left( \begin{array}{cccc}
L^{in}_{C_1\setminus T} &  &  &  \\
& L^{in}_{C_2}  &   & \\
& & \ddots & \\
&  &   &  L^{in}_{C_n}  \\
\end{array} \right).
\end{equation*}

Let us now show that $L^{in}_{C_1\setminus T}$ is of full column rank, i.e., the columns of $L^{in}_{C_1\setminus T}$ is linearly independent. 
We first observe the following fact.
By Lemma \ref{kernelthm}, each of $L^{in}_{C_i}$ has $\lambda=0$ as an eigenvalue with multiplicity one, and the corresponding eigenspace is spanned by $\mathbf{1}_{C_i}$.
Now suppose by contradiction that the columns of $L^{in}_{C_1\setminus T}$ are linearly dependent, so there exists $\mathbf{v}\neq\mathbf{0}$ such that $L^{in}_{C_1\setminus T}\mathbf{v}=\mathbf{0}$, or $L^{in}_{C_1\setminus T}\mathbf{v} + L^{in}_T\cdot\mathbf{0}=\mathbf{0}$. This means that $\mathbf{u}=(\mathbf{v},\mathbf{0})$ is an eigenvector associated to eigenvalue zero, which contradicts the fact that the eigenspace is spanned by $\mathbf{1}_{C_i}$. Therefore $L^{in}_{C_1\setminus T}$ is of full column rank. 

Since $L^{in}_{C_1\setminus T}$ is of full column rank, and $Ker(L^{in}_{C_i})=Span\{\mathbf{1}_{C_i}\}$ for $i\geq 2$. We conclude that $\mathbf{x}-\mathbf{1}\in Ker(L^{in}_{V\setminus T})=Span\{\mathbf{1}_{C_2}, \cdots, \mathbf{1}_{C_n}\}$. Therefore in order to satisfy $\|\mathbf{x}\|_0\leq 0.6n_1$, it is easy to see $\mathbf{x}=\mathbf{1}-\mathbf{1}_{C_2}-\mathbf{1}_{C_3}-\cdots-\mathbf{1}_{C_k}=\mathbf{1}_{C_1\setminus T}$, which results in a contradiction by our assumption. 
\end{proof}

The next theorem shows that $\mathbf{x}^{*}$ and $\mathbf{x}^{\#}$ are close to each other if $L$ and $L^{in}$ are close.

\begin{theorem} \label{IndAna}
Let $M:=L-L^{in}$. Suppose $T\subset C_1$, $\|M\|_2=o(n^{-1/2})$ and $\delta_{1.8n_1}(L)=o(1)$. Then
\begin{equation}
\frac{\|\mathbf{x}^\#-\mathbf{x}^*\|_2}{\|\mathbf{x}^*\|_2}= o(1).
\end{equation}
\end{theorem}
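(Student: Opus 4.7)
The plan is to cast this as a robust compressive sensing problem. Setting $E := L_{V\setminus T} - L^{in}_{V\setminus T}$ and $\mathbf{e} := \mathbf{y} - \mathbf{y}^{in}$, the noisy problem (\ref{inalglsqremoveT}) differs from the clean problem (\ref{noiseless}) only by these perturbations, both induced by $M = L - L^{in}$. Since $\mathbf{x}^* = \mathbf{1}_{C_1\setminus T}$ is exactly $s$-sparse with $s = 0.6 n_1$ by the previous theorem, the deviation $\mathbf{x}^\# - \mathbf{x}^*$ is a pure noise-propagation quantity, exactly the setting covered by the Subspace Pursuit stability result cited in the preliminaries (Theorem 2.5 of \cite{LaiMckenzie2020}, equivalently Corollary 1 of \cite{Li2016}).

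First, I would quantify the two perturbations. Because $E$ is a column submatrix of $M$, $\|E\|_2 \leq \|M\|_2 = o(n^{-1/2})$. The measurement perturbation equals $\mathbf{e} = M\mathbf{1}_{V\setminus T}$, giving $\|\mathbf{e}\|_2 \leq \|M\|_2\sqrt{|V\setminus T|} \leq \sqrt{n}\,\|M\|_2 = o(1)$. I would also record that $\|\mathbf{x}^*\|_2 = \sqrt{n_1 - |T|}$, which by the construction of Algorithm \ref{alg2} is of order $\sqrt{n_1}$, since $|T| = \gamma|\Omega| \approx 0.4(1+\epsilon)n_1 < n_1$ when $T \subset C_1$.

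Next, I would invoke the Subspace Pursuit stability theorem. Noting that the RIP hypothesis $\delta_{1.8 n_1}(L) = o(1)$ is exactly $\delta_{3s}$ at the relevant sparsity level $s = 0.6 n_1$, and that the restricted isometry constant is inherited by column submatrices so $\delta_{1.8 n_1}(L_{V\setminus T}) \leq \delta_{1.8 n_1}(L) = o(1)$, the theorem yields an inequality of the form
\[
\|\mathbf{x}^\# - \mathbf{x}^*\|_2 \leq C\bigl(\|E\|_2\,\|\mathbf{x}^*\|_2 + \|\mathbf{e}\|_2\bigr)
\]
for an absolute constant $C$, using that $L_{V\setminus T}\mathbf{x}^* - \mathbf{y} = E\mathbf{x}^* - \mathbf{e}$ so the residual noise has norm at most $\|E\|_2\|\mathbf{x}^*\|_2 + \|\mathbf{e}\|_2$. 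Dividing through by $\|\mathbf{x}^*\|_2$ and plugging in the bounds gives
\[
\frac{\|\mathbf{x}^\# - \mathbf{x}^*\|_2}{\|\mathbf{x}^*\|_2} \leq C\|E\|_2 + \frac{C\|\mathbf{e}\|_2}{\|\mathbf{x}^*\|_2} = o(1) + O\!\left(\frac{o(1)}{\sqrt{n_1}}\right) = o(1),
\]
which is the claim.

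The main obstacle is aligning the exact hypotheses of the Subspace Pursuit stability theorem with our setting. The cited result couples the allowable sensing-matrix perturbation magnitude with the restricted isometry constant of the unperturbed system, and one must verify that $\|E\|_2 = o(n^{-1/2})$ together with $\delta_{1.8 n_1}(L) = o(1)$ simultaneously satisfy the preconditions (e.g., smallness-of-perturbation thresholds) under which the stability bound is valid. A secondary worry is confirming that the constant $C$ in the resulting bound does not absorb hidden polynomial-in-$n$ factors coming from the particular form of the perturbation bounds for the $\ell_2$-relative error version of the theorem, since any such factor could erode the $o(1)$ conclusion when multiplied by $\|E\|_2 = o(n^{-1/2})$.
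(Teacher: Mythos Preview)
Your proposal is correct and follows essentially the same approach as the paper: bound the measurement perturbation $\|\mathbf{y}-\mathbf{y}^{in}\|_2 \leq \|M\|_2\sqrt{n} = o(1)$ and invoke the Subspace Pursuit stability result (Theorem~2.5 of \cite{LaiMckenzie2020}). In fact your write-up is more thorough than the paper's, which omits the explicit treatment of the sensing-matrix perturbation $E$ and the check that $\delta_{1.8n_1} = \delta_{3s}$; the concerns you flag about aligning hypotheses and hidden constants are legitimate but are equally glossed over in the paper's two-line argument.
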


\begin{proof}
Recall that $\mathbf{x}^\#$ is the output to (\ref{inalglsqremoveT}) after $O(\log n)$ iterations of \emph{Subspace Pursuit}.
By our assumption on $M$, we have
\begin{align*}
\|\mathbf{y}-\mathbf{y}^{in}\|_2&=\|L\mathbf{1}_{V\setminus T}-L^{in}\mathbf{1}_{V\setminus T}\|_2=\|(L-L^{in})\mathbf{1}_{V\setminus T}\|_2 \\
&\leq \|M\|_2\|\mathbf{1}_{V\setminus T}\|_2\leq o(n^{-1/2})\cdot \sqrt{n}=o(1).
\end{align*} 
Then applying Theorem 2.5 in \cite{LaiMckenzie2020}, we get the desired result.
\end{proof}

\begin{lemma} \label{indicatornorm}
Consider $K\subset [n]$, any $\mathbf{v}\in\mathbb{R}^{n}$, and $W^{\#}=\{i:\mathbf{v}_{i}>R \}$. If
$\|\mathbf{1}_{K}-\mathbf{v}\|_2\leq D$, then $|K\triangle W^{\#}|\leq \frac{D^2}{\min\{(1-R)^2, R^2\}}$.
\end{lemma}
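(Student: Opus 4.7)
The plan is to bound $|K \triangle W^{\#}|$ by showing that every index in this symmetric difference contributes at least a fixed positive amount to the squared $\ell_2$ error $\|\mathbf{1}_K - \mathbf{v}\|_2^2$, and then use the hypothesis that this error is at most $D^2$ to cap the cardinality. This is essentially a pointwise-then-sum argument, and the whole proof should fit in a few lines.

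First I would split the symmetric difference into its two natural pieces:
\[
K \triangle W^{\#} = (K \setminus W^{\#}) \;\cup\; (W^{\#} \setminus K),
\]
and analyze each piece separately. For $i \in K \setminus W^{\#}$, we have $(\mathbf{1}_K)_i = 1$ while $\mathbf{v}_i \le R$, so $|(\mathbf{1}_K)_i - \mathbf{v}_i| \ge 1-R$. For $i \in W^{\#} \setminus K$, we have $(\mathbf{1}_K)_i = 0$ while $\mathbf{v}_i > R$, so $|(\mathbf{1}_K)_i - \mathbf{v}_i| > R$. In both cases the per-coordinate gap is at least $m := \min\{1-R,\, R\}$, hence at least $m^2$ after squaring.

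Next I would assemble the bound. Since the entries of $\mathbf{1}_K - \mathbf{v}$ indexed by $K \triangle W^{\#}$ contribute nonnegatively to the squared norm,
\[
D^2 \;\ge\; \|\mathbf{1}_K - \mathbf{v}\|_2^2 \;\ge\; \sum_{i \in K \triangle W^{\#}} \bigl((\mathbf{1}_K)_i - \mathbf{v}_i\bigr)^2 \;\ge\; |K \triangle W^{\#}|\cdot m^2,
\]
and rearranging gives exactly $|K \triangle W^{\#}| \le D^2/\min\{(1-R)^2, R^2\}$.

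Since each step is either a set-theoretic decomposition or a one-line inequality, I do not expect any genuine obstacle; the only thing to be careful about is the strict-versus-nonstrict inequalities coming from the definition $W^{\#} = \{i : \mathbf{v}_i > R\}$ (so $i \notin W^{\#}$ means $\mathbf{v}_i \le R$, which still gives $1 - \mathbf{v}_i \ge 1 - R$, and $i \in W^{\#}$ means $\mathbf{v}_i > R$, which gives $|\mathbf{v}_i| > R \ge 0$). These strict/non-strict issues only affect whether the final inequality is strict, not the stated bound, so the conclusion is unaffected.
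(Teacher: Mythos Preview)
Your proof is correct and follows essentially the same approach as the paper: both arguments lower-bound $\|\mathbf{1}_K - \mathbf{v}\|_2^2$ by summing the per-coordinate contributions over $K\setminus W^\#$ (each $\ge (1-R)^2$) and $W^\#\setminus K$ (each $\ge R^2$), then pass to $\min\{(1-R)^2,R^2\}\cdot|K\triangle W^\#|$. The paper packages this via a support decomposition $\mathbf{v}=\mathbf{v}_{U^\#}+\mathbf{v}_{W^\#}$, but the underlying inequality is identical to yours.
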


\begin{proof}
Let $U^\#=[n]\setminus W^\#$, we can write $\mathbf{v}=\mathbf{v}_{U^\#}+\mathbf{v}_{W^\#}$ where $\mathbf{v}_{U^\#}$ and $\mathbf{v}_{W^\#}$ are the components of $\mathbf{v}$ supported on $U^\#$ and $W^\#$ respectively. Then we have
\begin{align*}
    \|\mathbf{1}_K-\mathbf{v}\|_2^2&=\|\mathbf{1}_K-\mathbf{v}_{U^\#}-\mathbf{v}_{W^\#}\|_2^2 \\
    & = \|\mathbf{1}_{K\setminus W^{\#}}-\mathbf{v}_{U^\#}\|_2^2+\|\mathbf{v}_{W^\#\setminus T}\|_2^2  \\
    &+\|\mathbf{1}_{K\cap W^{\#}}-\mathbf{v}_{K\cap W^{\#}}\|_2^2 \\
    &\geq \|\mathbf{1}_{K\setminus W^{\#}}-\mathbf{v}_{U^\#}\|_2^2+\|\mathbf{v}_{W^\#\setminus T}\|_2^2 \\
    &\geq (1-R)^2\cdot|K\setminus W^{\#}|+R^2\cdot|W^{\#}\setminus K| \\
    &\geq \min\{(1-R)^2, R^2\}(|K\setminus W^\#|+|W^\#\setminus K|) \\
    &=\min\{(1-R)^2, R^2\}|K\triangle W^\#|.
\end{align*}
Therefore $\|\mathbf{1}_{K}-\mathbf{v}\|_2\leq D$ implies $|K\triangle W^\#|\leq \frac{D^2}{\min\{(1-R)^2, R^2\}}$ as desired.
\end{proof}



\begin{theorem}
Suppose $T\subset C_1$. Then
\begin{equation}
    \frac{|C_1\triangle C_1^{\#}|}{|C_1|}\leq o(1)
\end{equation}
\end{theorem}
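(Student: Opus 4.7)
The plan is to combine the previous theorem bounding $\|\mathbf{x}^\# - \mathbf{x}^*\|_2$ with Lemma \ref{indicatornorm}, using the explicit form $\mathbf{x}^* = \mathbf{1}_{C_1 \setminus T}$ from the first theorem in this subsection. First I would observe that since $\mathbf{x}^\# \in \mathbb{R}^{|V|-|T|}$ is indexed by $V \setminus T$, the thresholded set $W^\# = \{i : \mathbf{x}^\#_i > R\}$ is automatically a subset of $V \setminus T$, so $W^\# \cap T = \emptyset$.

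Next I would unpack the symmetric difference $C_1 \triangle C_1^\# = C_1 \triangle (W^\# \cup T)$ under the assumption $T \subset C_1$. Splitting into the two one-sided differences, the elements of $C_1$ not in $W^\# \cup T$ are exactly $(C_1 \setminus T) \setminus W^\#$, while the elements of $W^\# \cup T$ not in $C_1$ reduce (using $T \subset C_1$ and $W^\# \cap T = \emptyset$) to $W^\# \setminus (C_1 \setminus T)$. Hence $|C_1 \triangle C_1^\#| = |(C_1 \setminus T) \triangle W^\#|$, which converts the problem into one that Lemma \ref{indicatornorm} is tailored for, with $K = C_1 \setminus T$ and $\mathbf{v} = \mathbf{x}^\#$.

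Then I would apply Theorem \ref{IndAna}: since $\mathbf{x}^* = \mathbf{1}_{C_1 \setminus T}$, we have $\|\mathbf{x}^*\|_2 = \sqrt{|C_1 \setminus T|} \le \sqrt{n_1}$, so
\begin{equation*}
\|\mathbf{x}^\# - \mathbf{x}^*\|_2 \;=\; o(1)\cdot\|\mathbf{x}^*\|_2 \;=\; o(\sqrt{n_1}).
\end{equation*}
Setting $D = \|\mathbf{x}^\# - \mathbf{x}^*\|_2$ and invoking Lemma \ref{indicatornorm} yields
\begin{equation*}
|C_1 \triangle C_1^\#| \;\le\; \frac{D^2}{\min\{(1-R)^2, R^2\}} \;=\; \frac{o(n_1)}{\min\{(1-R)^2, R^2\}}.
\end{equation*}
Dividing both sides by $|C_1| = n_1$ and noting that $R \in (0,1)$ is a fixed constant so the denominator is a positive constant, we get $|C_1 \triangle C_1^\#|/|C_1| = o(1)$, which is the claim.

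The argument is essentially bookkeeping, so I do not expect a serious obstacle. The only point requiring a bit of care is the set-theoretic manipulation showing $C_1 \triangle C_1^\# = (C_1 \setminus T) \triangle W^\#$, which relies critically on $T \subset C_1$ and on $W^\#$ being indexed by $V \setminus T$; missing either would spoil the reduction to Lemma \ref{indicatornorm}.
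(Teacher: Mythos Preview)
Your proposal is correct and follows essentially the same route as the paper: use $\mathbf{x}^*=\mathbf{1}_{C_1\setminus T}$, invoke Theorem~\ref{IndAna} to get $\|\mathbf{x}^\#-\mathbf{1}_{C_1\setminus T}\|_2=o(\sqrt{n_1})$, then apply Lemma~\ref{indicatornorm} with $K=C_1\setminus T$ and divide by $|C_1|$. In fact you are more careful than the paper, which glosses over the set-theoretic identity $C_1\triangle C_1^\# = (C_1\setminus T)\triangle W^\#$ that you spell out explicitly using $T\subset C_1$ and $W^\#\cap T=\emptyset$.
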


\begin{proof}
It is equivalent to show $|C_1\triangle C_1^\#|\leq o(n_1)$.
Note that $\mathbf{x}^*=\mathbf{1}_{C_1\setminus T}$. By Theorem \ref{IndAna}, we get $\|\mathbf{1}_{C_1\setminus T}-\mathbf{x}^\#\|_2\leq o(\|\mathbf{1}_{C_1\setminus T}\|_2)=o(\sqrt{n_1})$. We then apply Lemma \ref{indicatornorm} with $K=C_1\setminus T$, $W^\#=C_1^\#$, and $\mathbf{v}=\mathbf{x}^\#$ to get $|(C_1\setminus T)\triangle C_1^\#|\leq o(n_1)$. Therefore $|C_1\triangle C_1^\#|\leq o(n_1)$.
\end{proof}

\begin{figure}[t]
    \centering
\includegraphics[width=0.73\linewidth]{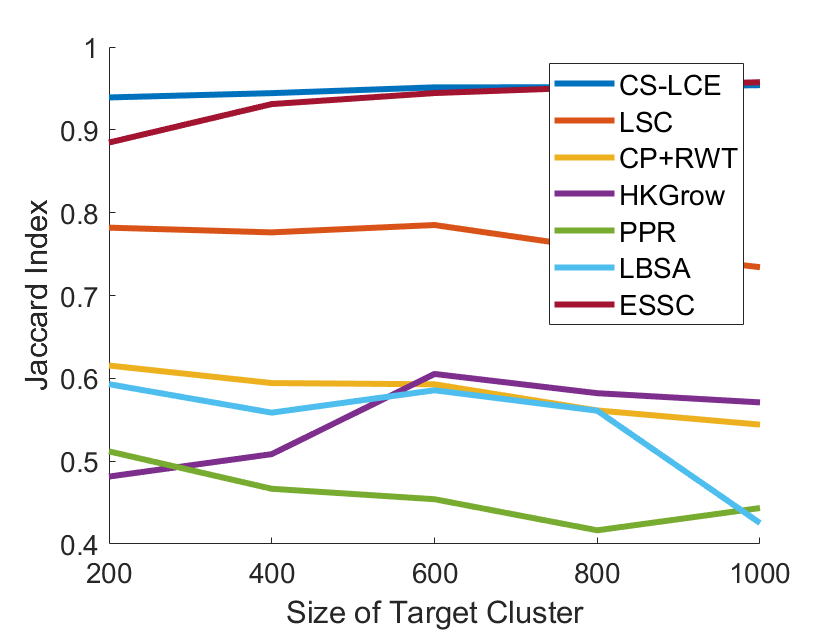} 
     	\includegraphics[width=0.73\linewidth]{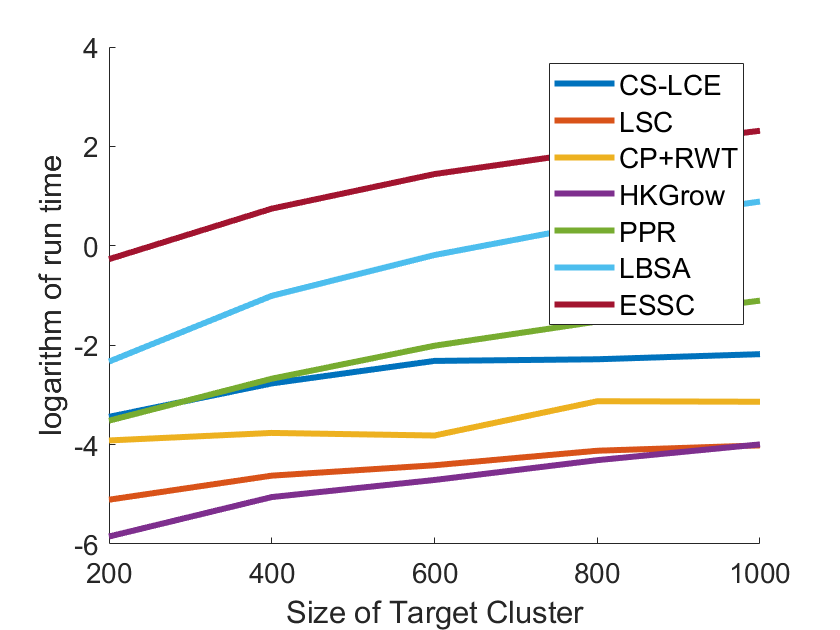}
    \vspace{-2mm}
    \caption{Performances on Symmetric Stochastic Block Model. \emph{Top}: Average Jaccard Index. \emph{Bottom}: Logarithm of Average Run Time.}
    
    \vspace{-2mm}
	\label{SSBMresults}
\end{figure} 

\begin{figure}[t]
    \centering
    \includegraphics[width=0.75\linewidth]{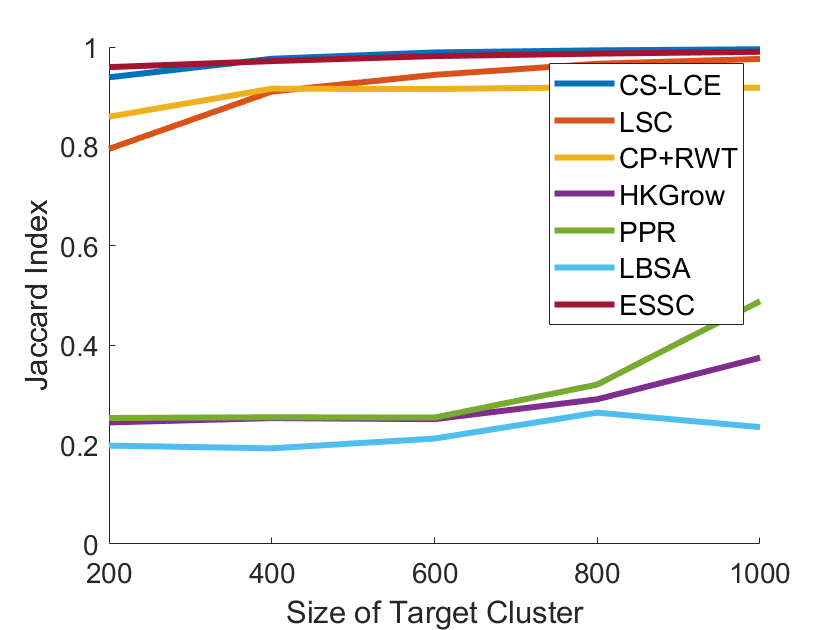} 
    \includegraphics[width=0.75\linewidth]{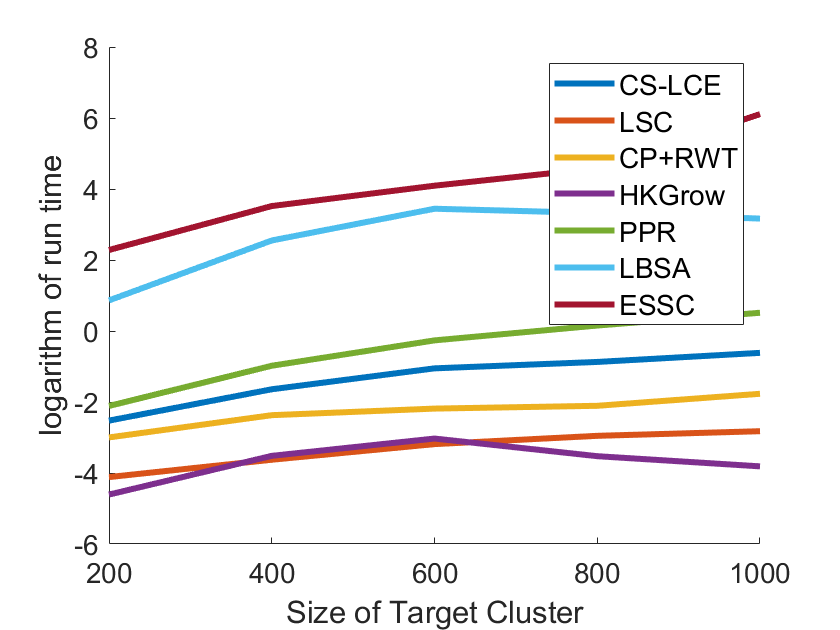}
    \vspace{-2mm}
    \caption{Performances on Non-symmetric Stochastic Block Model. \emph{Top}: Average Jaccard Index. \emph{Bottom}: Logarithm of Average Run Time.}
	\label{SBMresults}
    \vspace{-2mm}
\end{figure}  

\begin{figure}[t]
    \centering
    \resizebox{\columnwidth}{!}{
    \begin{tabular}{ccc}
	\includegraphics[width=0.2\textwidth]{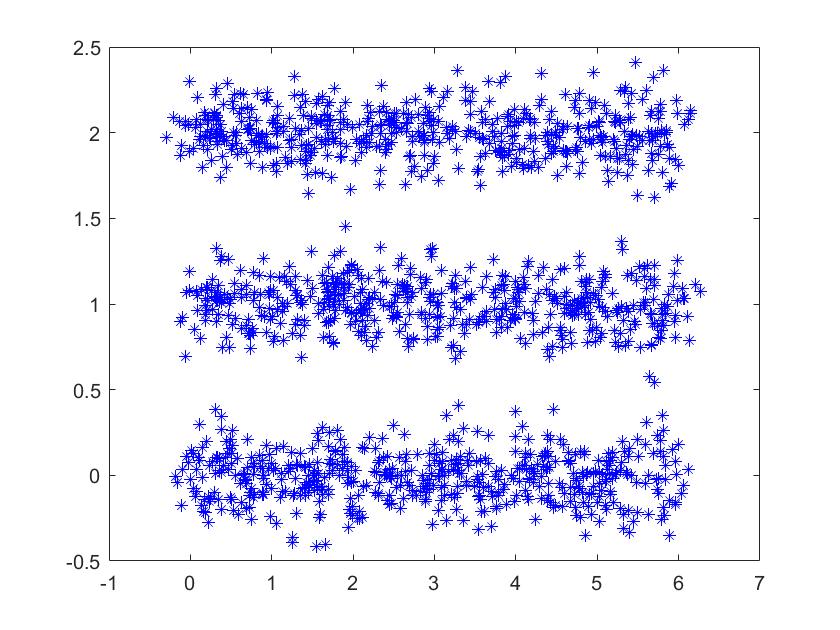}  & \includegraphics[width=0.2\textwidth]{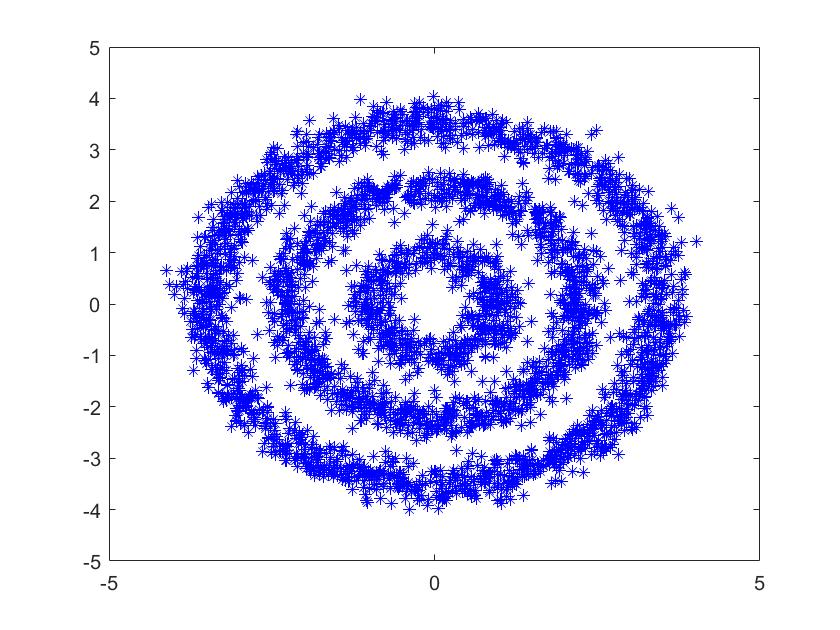} & \includegraphics[width=0.2\textwidth]{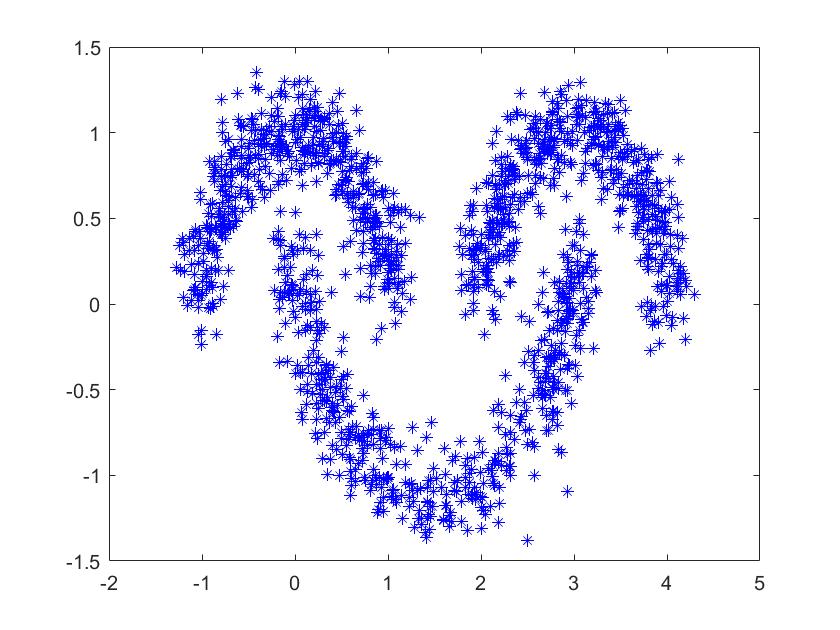} 
	\end{tabular} 
 }
    \vspace{-2mm}
    \caption{Visualizations of Geometric Data. From \emph{Left} to \emph{Right}: Three Lines, Three Circles, and Three Moons.}
    \vspace{-3mm}
    \label{GeoData}
\end{figure}


\section{Experiments} \label{secExp}
In this section, we evaluate Algorithm \ref{alg2} on various synthetic and real datasets and compare its performance with several baselines. For all experiments, we perform 100 individual runs. Additional details about the experiments are provided in the supplement. We make the supplement and code available at: \url{https://github.com/zzzzms/LocalClustering}.

\paragraph{Datasets.} We use simulated stochastic block model, simulated geometric data with three particular shapes, network data on political blogs\cite{AG05}, OptDigits\footnote{\url{https://archive.ics.uci.edu/ml/datasets/optical+recognition+of+handwritten+digits}}, AT\&T Database of Faces\footnote{\url{https://git-disl.github.io/GTDLBench/datasets/att_face_dataset/}}, MNIST\footnote{\url{http://yann.lecun.com/exdb/mnist/}}, and USPS\footnote{\url{https://git-disl.github.io/GTDLBench/datasets/usps_dataset/}} as our benchmark datasets. 

\paragraph{Baselines and Settings.} We adopt the LSC~\cite{LaiShen2022}, CP+RWT~\cite{LaiMckenzie2020}, HK-Grow~\cite{Kloster2014}, PPR~\cite{Andersen2007}, ESSC~\cite{Wilson2014}, LBSA~\cite{Shi2019}, and several other modern semi-supervised clustering algorithms as our baseline methods. For our experiments of stochastic block model, the only target cluster is the most dominant cluster, i.e., the cluster with the highest connection probability. For all other experiments, all of the clusters are considered as our target clusters, and we apply CS-LCE iteratively to extract all of them. We use Jaccard index to measure the performance of one cluster tasks and use mean accuracy across all clusters to measure the performance of multiple clusters tasks. 


\begin{table}[t]
	\centering
	\begin{tabular}{c|ccc}
	\toprule
           Datasets & 3 Lines & 3 Circles & 3 Moons  \\
    \midrule
    
	LSC & 89.0 (5.53) & 96.2 (3.71) & 85.3 (1.88) \\
	CP+RWT & 82.1 (9.06) & 96.1 (5.09) & 85.4 (1.33)  \\
 \textbf{CS-LCE} & \textbf{92.4} (8.13) & \textbf{97.6} (4.69) & \textbf{96.8} (0.89)  \\
	\bottomrule
	\end{tabular}
 \vspace{-2mm}
\caption{Mean Accuracy and SD on Geometric Data (\%)}
 \vspace{-2mm}
\label{GeoTable}
\end{table}    

\begin{table}[t]
    \centering
	\begin{tabular}{c|ccc}
		\toprule
		Label Ratios   & 10 \% & 20 \% & 30 \%  \\
		\midrule
		
		LSC & $94.8$ (3.32) & $97.8$ (1.18) & $98.2$ (0.77)
		\\
		CP+RWT & $93.7$ (3.34) & $97.8$ (1.44) & $98.3$ (0.43)
		\\
		SC & 95.8 (0.00) & 95.8 (0.00) & 95.8 (0.00) \\
  \textbf{CS-LCE} & \textbf{98.0} (1.90) & \textbf{99.1} (0.79) & \textbf{99.3} (0.59)
		\\
		\bottomrule
	\end{tabular}
  \vspace{-2mm}
\caption{Mean Accuracy and SD on AT\&T Data (\%)}
 \vspace{-2mm}
\label{TableATT}
\end{table}  

\begin{figure}[t]	
	\centering
 \includegraphics[width=0.2\textwidth]{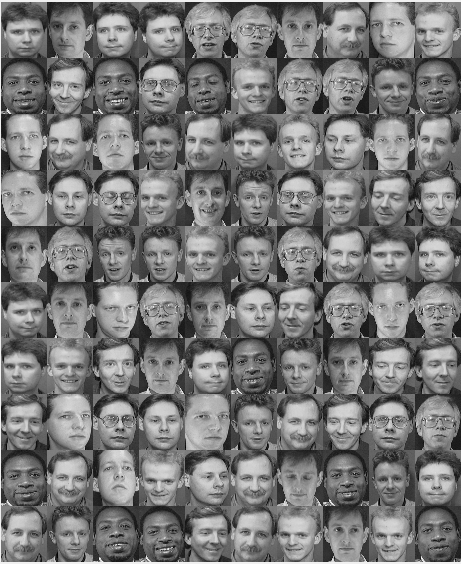} 
		\includegraphics[width=0.2\textwidth]{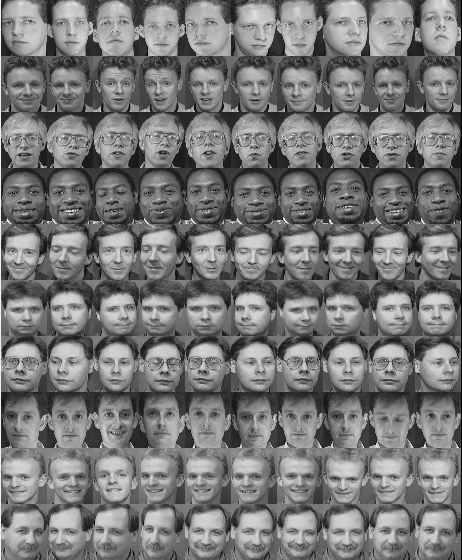}
  \vspace{-1mm}
	\caption{\emph{Left}: Randomly Permuted AT\&T Faces. \emph{Right}: Desired Recovery of all Clusters.}
	\label{FigureATT}
  \vspace{-1mm}
\end{figure}

\begin{figure}[t]
    \centering
    \includegraphics[width=0.75\linewidth]{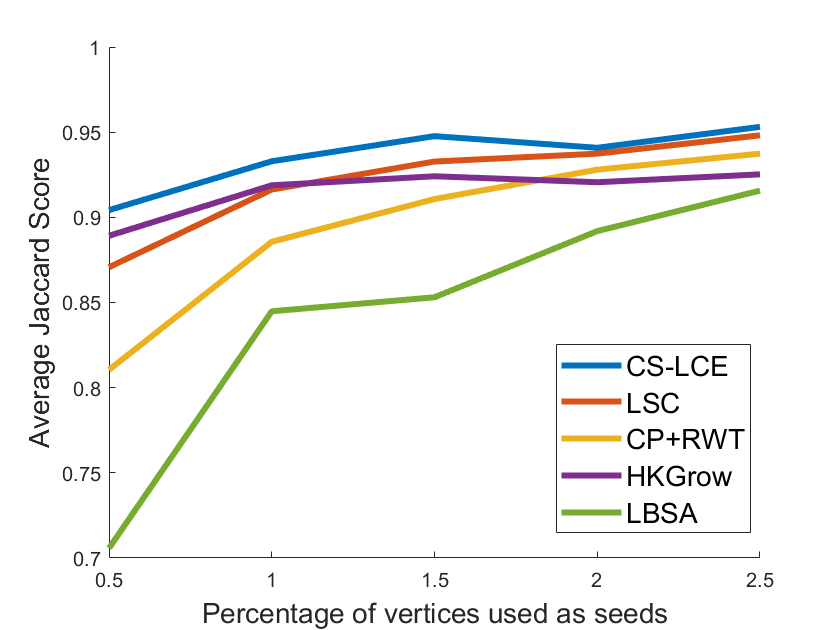} 
     \vspace{-1mm}
    \caption{Average Jaccard Index on OptDigits.}
	\label{OptDigitresults}
    \vspace{-1mm}
 
\end{figure} 

\subsection{Simulated Data}
\paragraph{Symmetric Stochastic Block Model.}
The stochastic block model is a generative model for random graphs with certain edge densities within and between underlying clusters. The edges within clusters are denser than the edges between clusters. In the case of each cluster has the same size and the intra- and inter-connection probability are the same among all vertices, we have the symmetric stochastic block model $SSBM(n,k,p,q)$. The parameter $n$ is the size of the graph, $k$ is the number of clusters, $p$ is the probability of intra-connectivity, and $q$ is the probability of inter-connectivity. In our experiments, we fix $k=3$ and vary $n$ among $600,1200,1800,2400,3000$. We choose $p=5\log n/n, q=\log n/n$. With five labeled vertices as seeds, we achieve the performances shown in Figure~\ref{SSBMresults}. We can see CS-LCE outperforms all other baselines with a reasonable running time.

\paragraph{Non-symmetric Stochastic Block Model.}
In a more general stochastic block model $SBM(\mathbf{n},k,P)$, where $n$ and $k$ are the same as symmetric case. The matrix $P$ indicates the connection probability within each individual cluster and between different clusters. 
It is worthwhile to 
note that the information theoretical bound for exact cluster recovery in SBM are given in \cite{Abbe2018} and \cite{Abbe2015}. 
In our experiments, we fix $k=3$, and the size of clusters are chosen as $\mathbf{n}=(n_1, 2n_1, 5n_1)$ where $n_1$ is chosen from $\{200, 400, 600, 800, 1000\}$. We set the connection probability matrix $P=[p,q,q; q,p,q; q,q,p]$ where $p=\log^2(8n_1)/(8n_1)$ and $q=5\log(8n_1)/(8n_1)$. With five labeled vertices as seeds, the clustering performances are shown in Figure~\ref{SBMresults}.

\begin{table}[t]
	\centering
	\begin{tabular}{c|ccc}
		\toprule
		Label Ratios & 0.05 \% & 0.10 \%  & 0.15 \% \\
		\midrule
		
		LSC & 77.0 (3.47)  & 83.6 (2.76) & 88.8 (2.52) \\
		CP+RWT & {74.1} (3.13) & 79.7 (2.43) & 85.0 (2.37) \\
  \textbf{CS-LCE} & \textbf{85.3} (2.67) & \textbf{89.8} (1.91) & \textbf{93.2} (1.76) \\
	    \bottomrule
	\end{tabular}
  \vspace{-1mm}
        \caption{Mean Accuracy and SD on MNIST (\%)}
         \vspace{-1mm}
        \label{tableMNIST}
\end{table}   


\begin{table}[t]
	\centering
	
	\begin{tabular}{c|ccc}
		\toprule
	Label Ratios	& 0.2 \% & 0.3\% & 0.4\% \\
		\midrule
		
		LSC & 72.3 (3.54) & 77.1 (3.42) & 80.4 (3.20) \\
		CP+RWT & 68.9 (3.17) & 73.3 (2.76) & 76.6 (2.59) \\
  \textbf{CS-LCE} & \textbf{76.8} (3.37) & \textbf{80.1} (3.14) & \textbf{84.1} (2.53) \\
	    \bottomrule
	\end{tabular}
  \vspace{-2mm}
\caption{Mean Accuracy and SD on USPS (\%)}
 \vspace{-2mm}
\label{tableUSPS}
\end{table}

\begin{table}[!t]
	\centering
	\begin{tabular}{c|cc}
		\toprule
		  & MNIST & USPS \\
		\midrule
		
	    KM-cst~\cite{Basu2004} & 54.27  & 68.18   \\
		AE+KM~\cite{MacQueen1967} & 74.09 & 70.28  \\
		 AE+KM-cst~\cite{Basu2004} & 75.98 & 71.87  \\
		DEC~\cite{Xie2016} & 84.94 & 75.81  \\
        IDEC~\cite{Guo2017} & 83.85 & 75.86 \\
        SDEC~\cite{Ren2019} & 86.11 & 76.39 \\
        \textbf{CS-LCE} (Ours) & \textbf{96.02}  & \textbf{82.10}   \\
	    \bottomrule
	\end{tabular}
  \vspace{-1mm}
 \caption{Mean Accuracy on MNIST and USPS (\%)}
  \vspace{-2mm}
 \label{tableOthers}
\end{table}

\paragraph{Geometric Data.}
We also simulated three high dimensional datasets in Euclidean space where the projections of the clusters onto two dimensional plane look like three lines, three circles, or three moons. See Figure~\ref{GeoData} for an illustration of them. These datasets are often used as benchmark for data clustering and they are also described in \cite{Mckenzie2019} with slightly different parameters. Because of the shape of underlying clusters, traditional $k$-means clustering or spectral clustering fail on these contrived datasets. In our experiments, for each dataset, we randomly select 10 seeds for each of the cluster. The mean accuracy and standard deviation of CS-LCE compared with LSC \cite{LaiShen2022} and CP+RWT \cite{LaiMckenzie2020} are given in Table \ref{GeoTable}. A more detailed description of this simulated dataset is given in the supplement.

\subsection{Human Face Images}
The AT\&T Database of Faces 
contains gray-scale images for $40$ different people of pixel size $92\times 112$. Images of each person are taken under $10$ different conditions, by varying the three perspectives of faces, lighting conditions, and facial expressions. We use part of this dataset by randomly selecting 10 people such that each individual is associated with 10 pictures of themselves. The selected dataset and desired recovery are shown in Figure~\ref{FigureATT}. 

The mean accuracy and standard deviation of CS-LCE compared with LSC \cite{LaiShen2022}, CP+RWT \cite{LaiMckenzie2020}, and spectral clustering (SC) are summarized in Table~\ref{TableATT}. Note that spectral clustering method is unsupervised, hence its accuracy does not affected by the label ratios.

\subsection{Network Data}
``The political blogosphere and the 2004 US Election" \cite{AG05} dataset contains a list of political blogs that were classified as liberal or conservative with links between blogs. An illustration of this dataset is attached in the supplement. The state-of-the-art result on this dataset is given in \cite{Abbe2015}. Their simplified algorithm gave a successful classification 37 times out of 40 trials, and each of the successful trials correctly classified all but 56 to 67 of the 1,222 vertices in the graph main component. 

In our experiments, given one labeled seed, CS-LCE succeeds 35 trials out of a total of 40 trials. Among these 35 successful trials, the average number of misclassified node in the graph main component is 49, which is comparable to the state-of-the-art result. We note that LSC \cite{LaiShen2022} also succeeds 35 out of 40 trials, but the average number of misclassified node equals to 55. We also note that CP+RWT \cite{LaiMckenzie2020} fails on this dataset.

\subsection{Digits Data}
\paragraph{OptDigits.}
This dataset contains grayscale images of handwritten digits from $0$ to $9$ of size $8\times 8$. There are a total of $5,620$ images and each cluster has approximately $560$ images. The average Jaccard index of CS-LCE compared with several other algorithms are shown in Figure~\ref{OptDigitresults}. we exclude PPR and ESSC in the comparison as they either too slow to run or the accuracy is too low.

\paragraph{MNIST and USPS.}
The MNIST dataset consists of $70,000$ grayscale images of the handwritten digits $0$-$9$ of size $28\times 28$ with approximately $7,000$ images of each digit. The USPS data set contains 9298 grayscale images, obtained from the scanning of handwritten digits from envelopes by the U.S. postal service. 
We test CS-LCE, LCS, CP+RWT, and several other modern semi-supervised methods on these two datasets, the results are show in Table~\ref{tableMNIST}, Table~\ref{tableUSPS} and Table~\ref{tableOthers}. 
It is worth pointing out that in Table~\ref{tableMNIST} and Table~\ref{tableUSPS}, we have only very few labeled data for our tasks. If one uses a neural network method to train for classification of images, then it usually needs more labeled data for training. 
In Table \ref{tableOthers}, we compare CS-LCE with several other constraint clustering algorithms. In each constrained clustering algorithms, the total number of pairwise constraints are set to equal to the total data points. Therefore in order to have a fair comparison, we choose a certain amount of labeled data in CS-LCE such that the total pairwise constraints are the same. 

\section{Conclusions} \label{secConclusion}
In this work, we proposed a semi-supervised local clustering approach based on compressive sensing. Our approach improves the disadvantages in prior work under the same framework, and it is shown to be asymptotically correct under certain assumptions of graph structure. Extensive Experiments on various datasets have validated its effectiveness. We hope this work will draw people's interests and bring attentions to this new perspective of local clustering. Potential research directions in the future could be done on developing a more calibrated way of choosing the removal set and investigating how to incorporate compressive sensing into some modern architectures such as deep neural networks.

\section*{Acknowledgements}
The second author is supported by the Simon Foundation Collaboration Grant \#864439. 
The third author is supported by the U.S.~Army Research Office Award under Grant Number W911NF-21-1-0109. 
We want to thank all the reviewers for their valuable feedback to improve the quality of this paper.

\bibliographystyle{named}
\bibliography{ijcai23}

\end{document}